\newtheorem{theorem}{Theorem}
\newtheorem{lemma}[theorem]{Lemma}
\def\tuple{\ensuremath{{(x,x^+,x^-_1,\ldots,x^-_m)}}} 
\def\eE{\ensuremath{{\mathbb{E}}}}
\def\rR{\ensuremath{{\mathbb{R}}}}
\def\sS{\ensuremath{{\mathbb{S}}}}
\def\Fcal{\ensuremath{{\mathcal{F}}}}
\def\Pcal{\ensuremath{{\mathcal{P}}}}
\def\Xcal{\ensuremath{{\mathcal{X}}}}
\def\Pneg{\ensuremath{{P_{\mathrm{neg}}}}}
\def\Psim{\ensuremath{{P_{\mathrm{sim}}}}}
\def\Pmar{\ensuremath{{P_{\mathrm{mar}}}}}
\def\frob{\ensuremath{{f_{\mathrm{rob}}}}}
\def\Omegamar{\ensuremath{{\Omega_{\mathrm{mar}}}}}
\def\BibTeX{{\rm B\kern-.05em{\sc i\kern-.025em b}\kern-.08em
    T\kern-.1667em\lower.7ex\hbox{E}\kern-.125emX}}
\begin{document}

\title{Hard Negative Sampling via Regularized Optimal Transport for Contrastive Representation Learning}

\author{\IEEEauthorblockN{1\textsuperscript{st} Ruijie Jiang}
\IEEEauthorblockA{\textit{dept. of ECE} \\
\textit{Tufts University}\\
Medford, USA \\
Ruijie.Jiang@tufts.edu}
\and
\IEEEauthorblockN{2\textsuperscript{nd} Prakash Ishwar}
\IEEEauthorblockA{\textit{dept. of ECE} \\
\textit{Boston University}\\
Boston, USA \\
pi@bu.edu}
\and
\IEEEauthorblockN{3\textsuperscript{rd} Shuchin Aeron}
\IEEEauthorblockA{\textit{dept. of ECE} \\
\textit{Tufts University}\\
Medford, USA\\
Shuchin@ece.tufts.edu}
}

\maketitle

\fancyfoot[L]{\footnotesize \textbf{Left Footer Text}} % Left footer content
\fancyfoot[C]{\footnotesize \textbf{Center Footer Text}} % Center footer content
\fancyfoot[R]{\footnotesize \href{https://www.example.com}{Link Text}} % Right footer content with link

\begin{abstract}
We study the problem of designing hard negative sampling distributions for unsupervised contrastive representation learning. We propose and analyze a novel min-max framework that seeks a representation which minimizes the maximum (worst-case) generalized contrastive learning loss over all couplings (joint distributions between positive and negative samples subject to marginal constraints) and prove that the resulting min-max optimum representation will be degenerate. This provides the first theoretical justification for incorporating additional regularization constraints on the couplings. We re-interpret the min-max problem through the lens of Optimal Transport (OT) theory and utilize regularized transport couplings to control the degree of hardness of negative examples. 
Through experiments we demonstrate that the negative samples generated from our designed negative distribution are more similar to the  anchor than those generated from the baseline negative distribution.
We also demonstrate that entropic regularization yields negative sampling distributions with parametric form similar to that in a recent state-of-the-art negative sampling design and has similar performance in multiple datasets. 
Utilizing the uncovered connection with OT, we propose a new ground cost for 
designing the negative distribution  and show improved performance of the learned representation on downstream tasks compared to the representation learned when using squared Euclidean cost.\footnote{Our code is publicly available at \href{https://github.com/rjiang03/HCL-OT}{https://github.com/rjiang03/HCL-OT}.} 
\end{abstract}

\begin{IEEEkeywords}
contrastive representation learning, hard negative sampling, optimal transport (OT)
\end{IEEEkeywords}

\section{Introduction}
In the unsupervised contrastive representation learning problem, we are given pairs of samples that are \textit{similar} to each other, e.g., an image and its augmentation, together with a set of dissimilar examples for each pair. One sample in a pair of similar samples is referred to as the \textit{anchor} and the other as the corresponding \textit{positive sample}. The dissimilar samples are referred to as the \textit{negative samples}. The goal is to learn a (typically) low-dimensional \textit{representation map} that amplifies the similarities and dissimilarities in the representation space by optimizing a suitable \textit{contrastive loss function}.  
The utility of learning this representation appears in a downstream task, where one can simplify both the model complexity, e.g., use a linear classifier, as well as reduce the number of training examples needed to achieve a desired classification error. A good set of representative papers for contrastive representation learning  in this context are \cite{oord2018representation, Zhu:2020vf}. 

{Several contrastive loss functions have been proposed in the literature} e.g., \cite{oord2018representation, wang2020understanding}. {Often,} the set of similar examples are determined using the \textit{domain knowledge}, e.g.,  SimCLR \cite{chen2020simple} uses augmented views of the same images as positive pairs for image data, \cite{perozzi2014deepwalk} use random walks on the graph to generate positive pairs for graph data, Word2Vec \cite{goldberg2014word2vec} and {Quick Thoughts} \cite{logeswaran2018efficient} define the neighborhood of words and sentences as positive for text data. On the other hand, the choice of negative samples, possibly conditioned on the given similar pair, remains an open design choice. It is well-known that this choice can theoretically \cite{saunshi2019theoretical, chuang2020debiased} as well as empirically \cite{tschannen2019mutual, jin2018unsupervised} affect the performance of contrastive learning. 

The main focus of this paper is to offer a novel theoretical and algorithmic  perspective on the design on negative sampling for unsupervised contrastive learning.

\section{Related work}

Several recent papers  \cite{robinson2021contrastive, kalantidis2020hard, ho2020contrastive, cherian2020representation} have revisited the design of negative sampling distributions and, in particular, ``hard'' negative and/or adversarially designed negative samples, an idea that originates in a related problem of metric learning \cite{weinberger2009distance, sohn2016improved}. Hard negative samples are samples that provide the ``most'' contrast to a given set of similar samples for a given representation map, while possibly trading off inadvertent \textit{hard class collisions in a downstream classification task} that are inevitable due to the unsupervised nature of the problem. In a similar vein, \cite{wu2021conditional}  proposes to use neither ``too hard'' nor ``too easy'' negative samples via predefined percentiles and \cite{zheng2021contrastive} makes use of ``learnable'' conditional distributions according to how informative the samples are to the anchor. 

The present work is most closely related to and motivated by the work in \cite{robinson2021contrastive} which proposed a simple exponential-form hard negative sampling distribution with a scalar exponential tilting parameter $\beta$ and developed a practical, low-complexity importance sampling strategy that accounts for the lack of true dissimilarity information. Under an oracle assumption that the support of the hard negative sampling distribution is disjoint from samples with the same class as the anchor (which is impossible to guarantee in an unsupervised setting where we have no knowledge of downstream decision-making tasks), they establish certain desirable worst-case generalization properties of their sampling strategy in the asymptotic limit $\beta \rightarrow \infty$. They also empirically demonstrate the performance improvements offered by their proposed sampling method in downstream tasks through extensive experiments on image, graph and text data.
In contrast to the heuristically motivated exponential form of the hard negative sampling distribution in \cite{robinson2021contrastive}, we develop a principled minimax framework for hard negative sampling based on regularized Optimal Transport (OT) couplings. The regularization can be systematically relaxed to effect a smooth control over the degree of hardness of negative samples.

\section{Main contributions}
This paper makes the following key contributions. 

\begin{itemize}
\setlength  
    \item In contrast to the current literature where the form of the negative sampling distribution is often heuristically motivated, albeit backed with good intuition and insight, we start with a principled (and novel) min-max perspective for sampling hard negatives. This formulation features negative sampling as an optimal (worst-case) coupling (a joint distribution subject to marginal constraints) between the anchor, positive, and negative samples. 
    \item We show that for a large class of contrastive loss functions, the resulting min-max formulation contains degenerate solutions. We believe this provides the first theoretical justification in the unsupervised setting (to the best of our knowledge) that selecting worst-case negative sampling distributions can produce degenerate solutions motivating the need for some type of regularization.
    \item We provide a framework for designing negative sampling distributions through the theory of regularized {OT}. We show that a negative sampling distribution having a parametric form similar to the one proposed in \cite{robinson2021contrastive} can be obtained as a special case of regularized {OT} with entropic regularization. We conduct several experiments using our entropy-regularized OT negative sampling framework to empirically corroborate the results of \cite{robinson2021contrastive}.
    \item Through the insight of regularized {OT}, we proposed a new cost function for the design of the hard negative distribution. With the designed negative distribution, the learned representation function shows consistently better performance on the downstream task when using a KNN classifier.
\end{itemize}

\section{Unsupervised Contrastive Learning}\label{sec:contrastive_learning}

Unsupervised contrastive representation learning aims to learn a mapping $f$ from a sample space $\Xcal$ to a representation space $\Fcal$ such that similar sample pairs stay close to each other in $\Fcal$ while dissimilar ones are far apart. We assume that the representation space is a subset of the Euclidean space ($\Fcal \subseteq \rR^{d}$) of low dimension and is normalized to have unit norm ($\Fcal = \sS^{d-1}$, the unit sphere in $\rR^d$). Our focus is on the setting in which the learner has access to {independent and identically distributed (IID)} $(m+2)$-tuples, where each tuple $\tuple$ consists of an anchor sample $x$ and, associated with it, a positive sample $x^+$ and $m$ negative samples $x^-_1, \ldots, x^-_m$. For each given anchor-positive pair $(x,x^+)$, the corresponding $m$ negative samples are conditionally IID with distribution $\Pneg(x^-|x,x^+)$.\footnote{For simplicity of exposition, we assume that all distributions are densities unless otherwise noted.} Thus, the joint distribution of the $(m+2)$-tuple is of the form: 
\begin{equation*}
P\tuple  = \Psim(x,x^+) \prod_{i=1}^m \Pneg(x^-_i|x,x^+). 
%\label{eq:joint}
\end{equation*}
The distribution $\Psim(x,x^+)$ for generating anchor-positive pairs can capture a variety of situations. These include word-context pairs in NLP data, with words as anchors and their context-words as positive samples, or image-augmentation pairs in visual data, with images as anchors and their augmentations, e.g., rotation, cropping, resizing, etc., as positive samples. \\
Implicit to many applications is the assumption that the anchor, positive, and negative samples have the same marginal distribution $\Pmar$. This property also holds for the recently proposed latent ``class'' modeling framework of \cite{saunshi2019theoretical} for contrastive unsupervised representation learning which has been adopted by several works, e.g., \cite{robinson2021contrastive}. 

Let $\Pcal(\Pmar)$ denote the set of joint distributions $P$ having the form shown above 
%Eq.(\ref{eq:joint})  
with a common marginal distribution $\Pmar$ for the anchor, positive, and negative samples.

The representation $f$ is learned by minimizing the expected value of a loss function $\ell(f(x),f(x^+),f(x^-_1),\ldots,f(x^-_m))$ over all $f \in \mathcal{F}$. Some of the most widely used and successful loss functions are the triplet and logistic loss functions:

\noindent \textbf{Triplet-loss} \cite{schroff2015facenet}
\begin{flalign*}
&\ell_{\mathrm{triplet}}(x,x^+,x^-,f) \\
&= \max(0,||f(x)-f(x^+)||^2 - ||f(x)-f(x^-)||^2 + \eta)\\
&= \max(0,2\,v + \eta)
\end{flalign*}
where $v := f^\top(x)\,f(x^-) - f^\top(x)\,f(x^+)$, $m=1$, and $\eta > 0$ is a margin hyper-parameter. The second equality above holds because $\mathcal{F} = \sS^{d-1}$.

\noindent \textbf{Logistic-loss} \cite{gutmann2010noise} also referred to as the $m$-pair multiclass logistic loss or sometimes loosely as the Noise Contrastive Estimation (NCE) loss 
{\small  
\begin{flalign}
\!\!\!\ell_{NCE}(x,x^+,\{x^-_i\},\,f)  
%&& \nonumber \\
&=  \log\left(1 + \frac{q}{m} \sum_{i=1}^m \frac{e^{f^\top(x)f(x^-_i)}}{e^{f^\top(x)f(x^+)}}\right) 
%&& 
\nonumber \\
&=  \log\left(1 + \frac{q}{m} \sum_{i=1}^m e^{v_i}\right)  
%&& 
\label{eq:NCE}
\end{flalign}}
\hspace{-1ex}where $v_i := f^\top(x)\,f(x^-_i) - f^\top(x)\,f(x^+)$ and $q > 0$ is a weighting hyper-parameter. The second equality above holds because $\mathcal{F} = \sS^{d-1}$.

\noindent \textbf{General convex non-decreasing loss}: We consider a general class of loss functions of the following form which includes the triplet and logistic loss functions:
\begin{flalign*}
\ell(x,x^+,\{x^-_i\},\,&f) =  \psi\left(v_1,\ldots,v_m\right)
\end{flalign*}
where $v_i := f^\top(x)\,f(x^-_i) - f^\top(x)\,f(x^+)$ and $\psi(\cdot)$ is a convex function of $(v_1,\ldots,v_m)$ which is argument-wise non-decreasing, i.e., non-decreasing with respect to $v_i$ for each $i$ at any $(v_1,\ldots,v_m)$. The optimal representation map is the $f^* \in \Fcal$ which minimizes the expected loss:
\begin{flalign*}
f^* = \arg\min_{f \in \Fcal} \eE_{P}[\psi(v_1,\ldots,v_m)]
\end{flalign*}
where the expectation is taken with respect to the joint distribution $P \in \Pcal(\Pmar)$ of $\tuple$. In practice, the expectation is replaced by an empirical average over IID training tuples with each tuple consisting of an anchor, a positive sample, and $m$ negative samples.

\section{Min-max contrastive learning} \label{sec:minimax_representation}

How does one design the negative sampling distribution $\Pneg$? This is a key question in unsupervised contrastive representation learning. In early works \cite{oord2018representation, chen2020simple}, the negative samples were generated \textit{independently} of the anchor and positive samples according to the marginal distribution $\Pmar$, i.e., $\Pneg(x^-|x,x^+) = \Pmar(x^-)$. Today we regard this as a baseline method for negative sampling. In later works \cite{robinson2021contrastive,wu2021conditional,zheng2021contrastive}, researchers noted that ``hard'' or ``semi-hard'' negative samples, i.e., samples that are near the anchor in representation space, {but far from it} (i.e., dissimilar) in sample space could guide learning algorithms to correct ``mistakes'' in downstream supervised tasks more quickly \cite{kalantidis2020hard}. A variety of strategies for sampling hard negatives have been devised in the metric learning literature \cite{xiong2021approximate}. They essentially aim to select samples that are difficult to discriminate based on the current representation, often leveraging ideas from importance sampling \cite{robinson2021contrastive}. 

In contrast, we propose to generate hard negatives by designing a worst-case $\Pneg$ distribution subject to suitable regularity constraints. We capture these constraints via $\Omega$, a family of $\Pneg$ that satisfy the regularity constraints. For a given $\Psim$ and $\Omega$, a robust representation $\frob$ is one which minimizes the maximum, i.e., worst-case, loss over $\Omega$:
\begin{flalign*}
\frob(\Omega) = \arg\min_{f \in \Fcal} \max_{\Pneg \in \Omega} \eE_P[\ell(x,x^+,\{x^-_i\})].
\end{flalign*}
A natural first candidate for $\Omega$ that may come to mind is the set of all $\Pneg$ that are marginally consistent with $\Pmar$, i.e., 
\[
\Omegamar := \{\Pneg:  \Psim \otimes_{i=1}^m \Pneg \in \Pcal(\Pmar)\}.
\]
Note that since the anchor and positive sample are assumed to have the same marginal, $\Pmar$ is completely specified by $\Psim$. Unfortunately, it turns out that $\frob(\Omegamar)$ contains degenerate solutions, i.e., solutions such that with probability one, $\frob(\Omegamar)(x) = c$ a constant vector in $\sS^{d-1}$. We state this negative result as a theorem below.
\begin{theorem}\label{thm:degenerate}
For the general convex argument-wise non-decreasing loss described in Sec.~\ref{sec:contrastive_learning}, the min-max optimum representation for the family of marginally consistent negative sampling distributions contains degenerate solutions with probability one, i.e., solutions such that 
\[
w.p.1, \quad \frob(\Omegamar)(x) = c
\]
a constant vector in $\sS^{d-1}$.
\end{theorem}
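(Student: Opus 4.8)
The plan is to show that the degenerate (constant) representation attains the value of the min‑max problem, hence lies in $\frob(\Omegamar)$. The argument splits into an upper bound — a constant map is no worse than $\psi(\mathbf{0})$ — and a matching lower bound — no map can beat $\psi(\mathbf{0})$.

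First I would evaluate the inner maximum at a constant map. If $f(\bfx) = \bfc \in \sS^{d-1}$ for (almost) every $\bfx$, then since every representation is a unit vector, $f^\top(x)\,f(x^-_i) = \bfc^\top\bfc = 1 = f^\top(x)\,f(x^+)$, so $v_i = 0$ for every realization and every $i$, no matter what $\Pneg$ is. Hence $\max_{\Pneg \in \Omegamar} \eE_P[\psi(v_1,\ldots,v_m)] = \psi(0,\ldots,0)$ for this $f$, giving the upper bound on the min‑max value.

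The key step is the matching lower bound: for an arbitrary $f \in \Fcal$, I would exhibit a single admissible coupling that already drives the inner objective up to $\psi(\mathbf{0})$. The trick is to let the adversary impersonate the positive sample, taking $\Pneg(x^-_i \mid x, x^+) := \Psim(x^-_i \mid x)$, the conditional law of a positive given the anchor (independently across $i$). One checks this is in $\Omegamar$: the induced joint of $(x, x^-_i)$ coincides with $\Psim$, whose second marginal is $\Pmar$ by the common‑marginal assumption, and the $m$ negatives remain conditionally i.i.d.\ given $(x,x^+)$; and under it $\eE_P[v_i] = \eE_{\Psim}[f^\top(x)f(x^+)] - \eE_{\Psim}[f^\top(x)f(x^+)] = 0$ for each $i$. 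Multivariate Jensen applied to the convex $\psi$ then yields $\eE_P[\psi(v_1,\ldots,v_m)] \ge \psi(\eE_P[v_1],\ldots,\eE_P[v_m]) = \psi(\mathbf{0})$, so the inner max is at least $\psi(\mathbf{0})$ for every $f$. Combining the two bounds, the min‑max value equals $\psi(\mathbf{0})$ and is attained by $f \equiv \bfc$; since $\bfc \in \sS^{d-1}$ was arbitrary, every constant map is a solution, which is exactly the degenerate behaviour claimed.

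I expect the main obstacle to be the construction of the adversarial coupling and the verification that it is admissible — in particular, reading off marginal consistency carefully from the definition of $\Omegamar$ together with the assumption that anchor, positive, and negative share the marginal $\Pmar$, and confirming the conditional‑i.i.d.\ structure of the $m$ negatives is preserved. Note that convexity of $\psi$ is precisely what powers the Jensen lower bound (coordinate‑wise monotonicity, though part of the loss class, is not needed here), and that the infimum over $f$ is genuinely attained, so $\arg\min$ is nonempty and contains the degenerate map. Measurability and integrability are immediate since $v_i \in [-2,2]$ on the sphere and a finite convex $\psi$ is bounded on compacta.
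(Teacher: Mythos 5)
Your proof is correct, and it takes a genuinely different route from the paper's. The paper first proves an intermediate Lemma~\ref{lem:degenerate} establishing $\min_f\max_{\Pneg\in\Omegamar}\eE_P[v]=0$ via two applications of Cauchy--Schwarz, and the adversary it constructs to saturate the inner maximum is the ``sample $x^-$ from the $f$-fiber of $x$'' coupling, which forces $f(x^-)=f(x)$ with probability one. Theorem~\ref{thm:degenerate} is then derived from the lemma by Jensen together with the coordinate-wise non-decreasing hypothesis on $\psi$. You instead go directly for the min-max value of $\eE_P[\psi(\cdot)]$: your adversary impersonates the \emph{positive} rather than the anchor, $\Pneg(x^-\mid x,x^+)=\Psim(x^-\mid x)$, which pins $\eE_P[v_i]=0$ for \emph{every} $f$ by construction, so Jensen alone yields the lower bound $\psi(\mathbf{0})$. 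Two things this buys: (i) it sidesteps Cauchy--Schwarz and the fiber-resampling construction entirely, and (ii) as you correctly observe, it makes the coordinate-wise monotonicity assumption superfluous for this theorem (the paper invokes it in the last step, $\min_f\max_{\Pneg}\eE_P[\psi]\ge\psi(\min_f\max_{\Pneg}\eE_P[v],\dots)$, to push the scalar min-max through $\psi$; your coupling makes $\eE_P[v_i]=0$ uniformly in $f$, so no such push-through is needed). What the paper's route buys in exchange is a freestanding Lemma~\ref{lem:degenerate} that is reused verbatim in the later Optimal Transport reinterpretation; your argument proves the theorem but would not directly furnish that lemma. Your verification that the positive-impersonating coupling lies in $\Omegamar$ (second marginal of $\Psim$ is $\Pmar$, conditional i.i.d.\ structure preserved across $i$) is exactly the point that needs care and you handle it correctly.
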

To prove Theorem~\ref{thm:degenerate} we will make use of the following lemma.
\begin{lemma}
\label{lem:degenerate} 
For any given $\Psim$,
\[
\min_{f\in\Fcal}\max_{\Pneg\in \Omegamar} \eE_{P}[f^\top(x)\,f(x^-) - f^\top(x)\,f(x^+)] = 0
\]
and equality can be attained for all $\Psim$ by a degenerate $f$, i.e., a constant representation map.
\end{lemma}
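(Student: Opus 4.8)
The plan is to sandwich the min--max value between $0$ (from below, for every $f$) and $0$ (from above, via a constant map), which simultaneously pins down the value and exhibits a degenerate optimizer.

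For the upper bound, I would simply test the constant representation. If $f \equiv \bfc$ for some fixed $\bfc \in \sS^{d-1}$, then $f^\top(x)f(x^-) - f^\top(x)f(x^+) = \bfc^\top\bfc - \bfc^\top\bfc = 0$ identically, for every tuple and hence for every $\Pneg$. Therefore $\max_{\Pneg \in \Omegamar}\eE_P[f^\top(x)f(x^-) - f^\top(x)f(x^+)] = 0$ for this $f$, which shows both that $\min_{f}\max_{\Pneg} \le 0$ and that the value $0$ is attained by a degenerate $f$; crucially this works for \emph{every} $\Psim$, since the integrand vanishes pointwise irrespective of the joint law.

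For the lower bound I would show that the inner maximum is nonnegative for \emph{every} $f \in \Fcal$. The key structural observation is that $\eE_P[f^\top(x)f(x^+)]$ depends only on $\Psim$, so maximizing over $\Pneg \in \Omegamar$ reduces to maximizing $\eE_P[f^\top(x)f(x^-)]$, a quantity that depends on $P$ only through the joint law of $(x,x^-)$. As $\Pneg$ ranges over $\Omegamar$ this joint law ranges over all of $\Pi(\Pmar,\Pmar)$, the couplings of $\Pmar$ with itself: any $\Pneg \in \Omegamar$ produces a coupling in $\Pi(\Pmar,\Pmar)$ because $x$ and $x^-$ each have marginal $\Pmar$, and conversely any target coupling $\pi$ is realized by taking $\Pneg(x^-\mid x,x^+) := \pi(x^-\mid x)$. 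In particular the joint law $\pi_0$ of $(x,x^+)$ under $\Psim$ lies in $\Pi(\Pmar,\Pmar)$, so choosing $\Pneg(x^-\mid x,x^+) := \Psim(x^-\mid x)$ (the negative sample mimics the conditional law of the positive sample) gives
\[
\max_{\Pneg \in \Omegamar}\eE_P[f^\top(x)f(x^-)] \;\ge\; \eE_{\pi_0}[f^\top(x)f(x^-)] \;=\; \eE_{\Psim}[f^\top(x)f(x^+)],
\]
hence $\max_{\Pneg \in \Omegamar}\eE_P[f^\top(x)f(x^-) - f^\top(x)f(x^+)] \ge 0$ for all $f$, so $\min_{f}\max_{\Pneg} \ge 0$. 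Combining the two bounds yields the value $0$, attained by a constant $f$.

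There is no serious obstacle; the only point needing mild care is the measure-theoretic bookkeeping in the $\Omegamar \leftrightarrow \Pi(\Pmar,\Pmar)$ correspondence (verifying marginals, and that $\Psim(x^-\mid x)$ is a legitimate conditional density under the paper's standing ``all distributions are densities'' convention). If one wishes to avoid even that, an alternative is the degenerate coupling $x^- = x^+$ almost surely, under which the objective is exactly $0$ for every $f$ and the lower bound is immediate, at the cost of stepping slightly outside the density convention.
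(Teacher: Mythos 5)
Your proof is correct, and your lower-bound argument is genuinely different from the paper's. The paper proves the lower bound by computing the inner maximum \emph{exactly}: it applies Cauchy--Schwarz (using $\Fcal=\sS^{d-1}$) to show $\eE_P[f^\top(x)f(x^-)]\le 1$, constructs a coupling in $\Omegamar$ that concentrates $f(x^-)=f(x)$ almost surely to attain this bound, concludes $\max_{\Pneg}\eE_P[f^\top(x)f(x^-)-f^\top(x)f(x^+)] = 1-\eE_{\Psim}[f^\top(x)f(x^+)]$, and then applies Cauchy--Schwarz a second time to get $\ge 0$. You instead sidestep the exact computation: by choosing $\Pneg(x^-\mid x,x^+)=\Psim(x^-\mid x)$ you make $(x,x^-)$ distributionally identical to $(x,x^+)$, so the objective evaluates to exactly $0$ for that feasible $\Pneg$, giving the nonnegativity of the inner sup directly for every $f$. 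Your route avoids Cauchy--Schwarz entirely, does not rely on the unit-norm constraint on $\Fcal$ (beyond needing a constant map to exist there), and produces a cleaner, more symmetric explanation of \emph{why} degeneracy occurs: the adversary can always make the negative indistinguishable from the positive. What it does not give you is the paper's closed form $1-\eE_{\Psim}[f^\top(x)f(x^+)]$ for the worst-case value at a fixed $f$, which the paper reuses downstream as the ``upper bound loss'' in the experiments; so the paper's longer route buys an explicit objective while yours buys brevity and generality for the lemma itself. Your remark about the $x^-=x^+$ point-mass alternative is fine but, as you note, sits outside the density convention; the $\Psim(x^-\mid x)$ construction is the clean choice.
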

\begin{proof} For each $f \in \Fcal$,
\begin{flalign*}
\eE_{P}[f^\top(x)\,f(x^-)] &\leq \sqrt{\eE_{\Pmar}||f(x)||^2\,\eE_{\Pmar}||f(x^-)||^2} = 1
\end{flalign*}
by the Cauchy-Schwartz inequality because $\Fcal = \sS^{d-1}$. Equality can be attained if, and only if, $f(x) = f(x^-)$ with probability one. One choice of $\Pneg$ which ensures this is $x^- = x$ with probability one, independent of $x^+$. If $f$ is not injective, this is not the only choice. More generally, if the conditional distribution of $x^-$ given $x$ is equal to the conditional distribution of $x$ given its representation $f(x)$, then $x^-$ will have the same marginal distribution as $x$, i.e., $\Pmar$ since its generation is distributionally indistinguishable from the generation of $x$ in  two steps: first generate a representation $f(x)$ and then sample $x$ from the pre-image of $f(x)$ in $\Xcal$. Moreover, by construction, given $x$, $f(x^-) = f(x)$ with probability one. Thus, 
\begin{align}
&\max_{\Pneg\in \Omegamar} \eE_{P}[f^\top(x)\,f(x^-) - f^\top(x)\,f(x^+)]  \nonumber \\ 
&= \max_{\Pneg\in \Omegamar} \eE_{P}[f^\top(x)\,f(x^-)] -  \eE_{\Psim}[f^\top(x)\,f(x^+)]  \nonumber \\
&= 1 -  \eE_{\Psim}[f^\top(x)\,f(x^+)] \label{eq:maxoverPneg}
\end{align}
{
An application of the Cauchy-Schwartz inequality a second time gives us
\begin{align*}
%\eE_{\Psim}[f^\top(x)\,f(x^+)] &\leq \sqrt{\eE_{\Pmar}||f(x)||^2\,\eE_{\Pmar}||f(x^+)||^2} = 1
\eE_{\Psim}\big[f^\top(x)\,f(x^+)] &\leq \eE_{\Psim}[||f(x)||\cdot||f(x^+)||\big] = 1
\end{align*}
where the last equality holds because $\mathcal{F} = \sS^{d-1}$.} Equality can be attained in the inequality simultaneously for all $\Psim$ if $f$ is a constant representation map. From this and Equation~(\ref{eq:maxoverPneg}) it follows that 
\begin{align*}
&\min_{f\in\Fcal}\max_{\Pneg\in \Omegamar} \eE_{P}[f^\top(x)\,f(x^-) - f^\top(x)\,f(x^+)] \\
&= 1 -  \max_{f\in\Fcal}\eE_{\Psim}[f^\top(x)\,f(x^+)] \geq 1 - 1 =0
\end{align*}
and equality can be attained in the inequality for all $\Psim$ if $f$ is a constant representation map.
\end{proof}
The proof of Theorem~\ref{thm:degenerate} now follows from the convexity and argument-wise non-decreasing properties of the loss function. By Jensen's inequality for convex functions we have,
\begin{align*}
&\eE_P[\psi(v_1,\ldots,v_m)] \geq \psi(\eE_P[v_1],\ldots,\eE_P[v_m])   \\
&= \psi(\eE_P[v],\ldots,\eE_P[v])
\end{align*}
where $v(x,x^+,x^-) := f^\top(x)\,f(x^-) - f^\top(x)\,f(x^+)$. This is because  all negative samples are IID conditioned on $x,x^+$.
From Lemma~\ref{lem:degenerate}, we have
\[
\min_{f\in\Fcal} \max_{\Pneg\in\Omegamar} v(x,x^+,x^-) = 0. 
\]
Since $\psi(v_1,\ldots,v_m)$ is argument-wise non-decreasing, it follows that 
\begin{align*}
&\min_{f\in\Fcal} \max_{\Pneg\in\Omegamar}\eE_P[\psi(v_1,\ldots,v_m)]  \\
&\geq
\psi(\min_{f\in\Fcal} \max_{\Pneg\in\Omegamar}\eE_P[v],\ldots,\min_{f\in\Fcal} \max_{\Pneg\in\Omegamar}\eE_P[v]) \\
&= \psi(0,\ldots,0).
\end{align*}
and equality can be attained by a degenerate representation which maps all samples to a constant vector in $\sS^{d-1}$. This concludes the proof of Theorem~\ref{thm:degenerate}.

The negative result of Theorem~\ref{thm:degenerate} motivates the need to incorporate other regularization constraints into $\Omega$ in order to preclude degenerate solutions. To develop this, in the next section we will interpret the min-max loss using the lens of {OT} and then utilize regularized transport couplings to design an optimal $\Pneg$.

\section{An Optimal Transport Perspective}
We begin with the necessary background on Optimal Transport (OT). Given two probability distributions $P(x)$ and $P(y)$ over $\mathbb{R}^d$, let $\Pi$ denote the set of joint distributions or couplings $P(x,y)$ with marginals $P(x)$ and $P(y)$. The problem of optimal transport (OT) \cite{santambrogio2015optimal} is to seek the optimal coupling that solves 
\begin{align}
    \mathsf{OT}(P(x), P(y), c)=  \min_{P(x,y) \in \Pi} \mathbb{E}_{P(x,y)}[c(x,y)],
\end{align}
where $c(x,y)$ referred to as the ground-cost, {is} the cost of \textit{transporting} $x$ to $y$. Existence of the solutions to this problem is guaranteed under fairly general assumptions on the ground-cost \cite{villani2021topics}.

To see how the OT set-up arises in the problem at hand, it is useful to focus {on} the case where we have a triplet $x,x^+,x^-$ and revisit the core Lemma \ref{lem:degenerate} and manipulate the objective like so
{
\small  
  \setlength{\abovedisplayskip}{3pt}
  \setlength{\belowdisplayskip}{\abovedisplayskip}
  \setlength{\abovedisplayshortskip}{0pt}
  \setlength{\belowdisplayshortskip}{3pt}
\begin{align}
& \min_{f\in\Fcal}\max_{\Pneg\in \Omegamar} \eE_{P}[f^\top(x)\,f(x^-) - f^\top(x)\,f(x^+)] \notag\\
= & \min_{f \in \Fcal}  \max_{P(x,x^-)\in \Pi} \eE_{P(x,x^-)}[f^\top(x)\,f(x^-)] - \eE_{\Psim}[f(x)^\top f(x^+)], \notag 
\end{align}}
where $\Pi$ is the set of couplings between $x,x^-$ with marginals $\Pmar$. Under this constraint, {noting that $\mathcal{F} = \sS^{d-1}$} and completing the square, we have that,
\begin{align}
    & \max_{P(x,x^-) \in \Pi} \eE{_{P(x,x^-)}}[f(x)^\top f(x^-)] \notag \\
%    & = -0.5 \min_{P(x,x^-) \in \Pi} \eE{_{P(x,x^-)}}[\|f(x) -  f(x^-)\|_2^2] +   \eE\|f(x)\|_2^2. \notag
    & = -0.5 \min_{P(x,x^-) \in \Pi} \eE{_{P(x,x^-)}}[\|f(x) -  f(x^-)\|_2^2] +   1. \notag
\end{align}
{We observe} that $\min_{P(x,x^-) \in \Pi} \eE{_{P(x,x^-)}}[\|f(x) -  f(x^-)\|_2^2] = \mathsf{OT}(P(f(x)), P(f(x^-)), c)$ with $c(x,x^-) = 0.5 \|f(x) - f(x^-) \|_2^2$.  {Since $x,x^-$ have the same marginals, it follows that $f(x)$ and $f(x^-)$ will have the same distributions and therefore $\mathsf{OT}(P(f(x)), P(f(x^-)), c) = 0$.} Hence, the min-max problem boils down to 
\begin{align}
%\min_{f \in \Fcal} \eE{_{P(x,x^-)}}\|f(x)\|_2^2 - \eE_{P_{sim}}[f(x)^\top f(x^+)],
\min_{f \in \Fcal} \big( 1 - \eE_{\Psim}[f(x)^\top f(x^+)] \big),
\end{align}
{which as we previously discussed has a degenerate solution where $f$ is a constant mapping and the optimal value is zero.}

In order to avoid the degeneracy that implicitly arises due to the OT coupling, {it would be desirable for $P(x,x^-)$ to satisfy the following properties.} 

\begin{itemize}
\setlength 
\item[1)] {\textbf{Non self-coupling:} for any triplet $(x, x^-,x^+)$, $x^- \neq x$ and $x^- \neq x^+$.}  
\item[2)] \textbf{{Product-coupling-improving:}} It is desirable that $P(x,x^-)$  {improves over the default coupling where $x$ and $x^-$ are IID, i.e.,}
\begin{align}
    \mathop{\eE}_{ \Pmar(x) \Pmar(x^-)} [f(x)^\top   f(x^-)] \leq \mathop{\eE}_{P(x,x^-) } [f(x)^\top   f(x^-)].\notag
\end{align}
\end{itemize}
In the next section we will show that these properties can be realized within the framework of \textit{regularized} OT {which also enables tuning the hardness of the designed negative distribution.}

\subsection{Regularized OT for negative sampling}
We consider the following regularized version of optimal transport \cite{genevay_thesis, peyre2019computational} with ground-cost $c(x,x^-) = 0.5 \| f(x) - f(x^-)\|^2$ if $x^- \neq x$ or $x^- \neq x^+$  and $\infty$ otherwise to avoid self-coupling 
{\footnotesize
\begin{align}
    &P^*(x,x^-) \nonumber \\
    &=\arg\!\!\!\!\!\!\!\!\!\min_{P(x,x^-) \in \Pi} \!\!\!\! \eE[c(x,x^-)] + \epsilon\, \varphi
%    \left(\frac{P(x,x^-)}{\Pmar(x)\Pmar(x^-)}\right),
    \left(P(x,x^-), \Pmar(x)\Pmar(x^-)\right),
    \label{Entrop_ot}
\end{align}
}
\hspace{-1ex}where $\varphi(u, v)$ is a convex function of $(u,v)$. 
The choice $\varphi = \mathsf{KL}(\cdot||\cdot)$, where 
\begin{align}
& \mathsf{KL}(P(x,x^-)|| \Pmar(x)\Pmar(x^-))  \notag\\ = &  \int_x \int_{x^-} P(x,x^-) \log \frac{P(x,x^-)}{\Pmar(x)\Pmar(x^-)}  dxdx^-,     
\end{align} 
is the Kullback-Liebler divergence, leads to what is referred to as entropy-regularized OT which can be implemented in a computationally efficiently manner using the Sinkhorn algorithm \cite{peyre2019computational}. The parameter $\epsilon$ is referred to as the entropic regularization parameter. For the $\mathsf{KL}$ choice we note the following properties:
\begin{enumerate}
\setlength 
    \item The hardness of negative samples decreases as $\epsilon$ increases from zero to infinity.
    \item Since $\mathsf{KL} \geq 0$ with equality iff $P(x,x^-) = \Pmar(x)\Pmar(x^-)$, it can be verified that
    {\small  
  \setlength{\abovedisplayskip}{3pt}
  \setlength{\belowdisplayskip}{\abovedisplayskip}
  \setlength{\abovedisplayshortskip}{0pt}
  \setlength{\belowdisplayshortskip}{3pt}
    \begin{align}
    \mathop{\eE}_{ \Pmar (x) \Pmar(x^-)} [f(x)^\top   f(x^-)] \leq \mathop{\eE}_{P^*(x,x^-) } [f(x)^\top   f(x^-)] \notag
\end{align}}
\end{enumerate}
\textbf{Connection to \cite{robinson2021contrastive}} We now show how the optimal solution to Equation~(\ref{Entrop_ot})  justifies the empirical choice made in \cite{robinson2021contrastive}. To this end, we note the following result.
\begin{lemma} \cite{genevay_thesis} \label{connection}
There exist continuous functions $u(x), v(x^-)$ such that the optimal entropy-regularized coupling is given by,
\begin{align}
     P^*(x,x^-) \notag = \Pmar(x)\,\Pmar(x^-)\, e^{\frac{u(x) + v(x^-) - c(x,x^-)}{\epsilon}} \notag
\end{align}
%}
\end{lemma}
Specializing this result to our case we note that 
\begin{align}
     P^*(x,x^-) = \Pmar(x)\,\Pmar(x^-)\,e^{\frac{u(x) + v(x^-) +  f(x)^\top f(x^-) - 1}{\epsilon}}  \notag,
\end{align}
which implies that, 
    \begin{align}
    & P^*(x^-|x) \propto \Pmar(x^-)\, e^{\frac{f(x)^\top f(x^-)}{\epsilon}}  \label{eq:opt_ent_reg_coupling}
\end{align}
We now recall that in \cite{robinson2021contrastive}, the authors chose $P(x^-|x) \propto \Pmar(x^-)\,e^{\beta f(x)^\top f(x^-)} $ for the design of hard-negatives which is of the similar exponential form as $ P^*(x^-|x)$ given by Equation~\eqref{eq:opt_ent_reg_coupling}.\\
{In this context we would like to point out that if one removes the marginal constraints captured by $\Pi$ in \eqref{Entrop_ot} with KL divergence as the regularizer then 
\begin{align*}
    & \arg\min_{P(x,x^-)} \!\!\!\! \eE[c(x,x^-)] + \epsilon \mathsf{KL}(P(x,x^-)|| \Pmar(x)\Pmar(x^-)) \\
    & = \arg\min_{P(x,x^-)} \!\!\!\! \mathbb{E}\Bigg[ \log \Bigg( \frac{P(x,x^-)}{e^{- \frac{c(x,x^-)}{\epsilon}}\Pmar(x) \Pmar(x^-)}\Bigg)\Bigg].
\end{align*}
Noting that for the case at hand, $c(x,x^-) = 1 - f(x)^\top f(x^-)$, the optimal solution satisfies $P(x,x^-) \propto \Pmar(x) \Pmar(x^-) \, e^{ \beta f(x)^\top f(x^-)}$ with $\beta = \frac{1}{\epsilon}$ which implies $P(x^-|x) \propto \Pmar(x^-) \, e^{\beta f(x)^\top f(x^-)} $, which corresponds to the design of negative sampling proposed in \cite{robinson2021contrastive}. 
}

\section{Experimental Corroboration}

In the previous section, we have shown that our entropy-regularized OT solution  has an exponential form similar to the hard negative sampling distribution of  \cite{robinson2021contrastive} which they refer to as HCL. In this section, we present experiments on four image and five graph datasets to empirically demonstrate that our proposed method for negative sampling can indeed match (and sometimes slightly exceed) the state-of-the-art performance results reported in \cite{robinson2021contrastive}. For these experiments, we adopt the same simulation set-up as in HCL \cite{robinson2021contrastive}.
We use their implementation code but replace their negative coupling with the obtained by solving Equation~\eqref{Entrop_ot} with $\varphi = \mathsf{KL}$.

For entropy-regularized OT, we use the POT package \cite{flamary2021pot} to implement the numerically stable version of the entropy-regularized OT algorithm in \cite{cuturi2013sinkhorn}. 

\subsection{Image datasets}
We work with four image datasets: STL10 \cite{coates2011analysis}, CIFAR10, CIFAR100 \cite{krizhevsky2009learning}, and Tiny-ImageNet \cite{le2015tiny}, each containing images with 10, 10, 100, and 200 classes, respectively. To evaluate our method, we use SimCLR \cite{chen2020simple} as a baseline and compare it to the algorithm proposed in \cite{robinson2021contrastive} using three different loss functions to learn the mapping $f$ with negative samples. These include the large-$m$ asymptotic form of the NCE loss ($m\rightarrow \infty$) \cite{robinson2021contrastive} given by 
{\footnotesize
\begin{align}
\!\!\!\!\mathcal{L}_{NCE} = \eE_{\Psim(x,x^+)}\Bigg[\log\Bigg(1 + \frac{o \cdot \eE_{P^*(x^-|x)}[e^{f(x)^\top f(x^-)}]}{e^{f(x)^\top\,f(x^+)}}\Bigg)\Bigg] \label{NCE_loss_2}
\end{align}
}
\hspace{-1ex}where $o$ is the weighting parameter, the debiased NCE loss  \cite{chuang2020debiased}, and the Basic Contrastive Loss (BCL) associated with the objective of Lemma~\ref{lem:degenerate}, i.e., 
\begin{align}
\label{eq:upper}
    \mathcal{L}_{BCL} = \mathbb{E}[f^\top(x) f(x^-) - f^\top(x) f(x^+)].
\end{align}
We report classification accuracy for representations learned by optimizing these three loss functions for the STL10, CIFAR10, and CIFAR100 datasets. For Tiny-ImageNet, we only report results based on the NCE loss in Equation~(\ref{NCE_loss_2}) as it is a large dataset. In our experiments, we approximate all expectations appearing in the loss functions using empirical averages over a batch of $B$ training samples. 
For the representation function we use the modified ResNet-50 architecture in  \cite{robinson2021contrastive}
%\cite{He2015} as the representation function 
with a representation dimension of 2048 and a projection head with a dimension of 128 used in  \cite{robinson2021contrastive}. 
For linear reasdout, we train a linear classifier with a fixed representation generated by the learned ResNet-50.

\begin{table}[tbp]
\caption{\label{tbl:STL10}
{Best linear readout accuracy on STL10 dataset.}}
\begin{center}
\begin{tabular}{|c c c c|}
\hline
 Loss function &  SimCLR \cite{chen2020simple}& HCL\cite{robinson2021contrastive}  & Entropic OT  \\ 
 & & & (best $\epsilon$)\\
\hline
 $\mathcal{L}_{BCL}$  &None&83.0&82.8 ($\epsilon = 0.3$)\\ 
 \hline
 NCE  &80.2&84.4&85.0 ($\epsilon = 0.3$)\\
 \hline
 Debiased NCE \cite{chuang2020debiased} &84.3&87.4&87.5 ($\epsilon = 0.3$)\\
 \hline
\end{tabular}
\end{center}
%\vspace{-4mm}

\end{table}

\begin{table}[tbp]
\caption{\label{tbl:CIFAR10}
{Best linear readout accuracy} on CIFAR10 dataset.}
\begin{center}
\begin{tabular}{|c c c c|}
\hline
 Loss function &  SimCLR \cite{chen2020simple}& HCL\cite{robinson2021contrastive}  & Entropic OT \\ 
  & & & (best $\epsilon$)\\
\hline
 $\mathcal{L}_{BCL}$   &None&90.2&90.8 ($\epsilon = 0.7$)\\ 
 \hline
 NCE  &91.0&91.4&91.2 ($\epsilon = 0.5$)\\
 \hline
 Debiased NCE \cite{chuang2020debiased}&91.2&92.1&91.8 ($\epsilon = 0.7$)\\
 \hline
\end{tabular}
\end{center}
%\vspace{-4mm}

\end{table}

\begin{table}[tb]
\caption{\label{tbl:CIFAR100}
{Best linear readout accuracy on} CIFAR100 dataset.}
\begin{center}
\begin{tabular}{|c c c c|}
\hline
 Loss function &  SimCLR \cite{chen2020simple}& HCL\cite{robinson2021contrastive}  & Entropic OT \\ 
  & & & (best $\epsilon$)\\
\hline
 $\mathcal{L}_{BCL}$   &None&64.8&67.3 ($\epsilon = 0.5$)\\ 
 \hline
 NCE  &66.6&69.2&69.1 ($\epsilon = 0.7$)\\
 \hline
 Debiased NCE \cite{chuang2020debiased}&67.5 &69.3 &69.5 ($\epsilon = 0.7$)\\
 \hline
\end{tabular}
\end{center}
%\vspace{-3mm}

\end{table}

\textbf{Training Procedure:} 
 There are two hyperparameters to tune namely, $\epsilon$ for our method, and $\tau$ that appears as a hyper-parameter in the debaised loss \cite{chuang2020debiased}. For these we perform a grid search over the set $\{0.1,0.3,0.5,0.7,1\}$ for $\epsilon$ and the set $\{0.01, 0.05, 0.1, 0.5\}$ for $\tau$. We kept all other hyperparameters the same as in \cite{robinson2021contrastive} to ensure a fair comparison.

All models are trained for $E = 400$ epochs with batch size $B = 512$. We use the Adam optimizer with $0.001$ as learning rate and $10^{-6}$ as the weight decay value. We use NVIDIA A100 32 GB GPU for our computations and it takes about 20 hours to train one model (400 epochs) for each dataset.

\textbf{Results:} Tables~\ref{tbl:STL10}, \ref{tbl:CIFAR10}, \ref{tbl:CIFAR100}, and \ref{tbl:Tiny-imagenet} compare the best accuracies attained by different methods using three different loss functions on four image datasets. Our proposed method clearly improves over the baseline. For the NCE loss, we observe absolute improvements of $4.8$, $2.5$, and $2.9$ percentage points over SimCLR on the STL10, CIFAR100, and Tiny-imagenet datasets respectively. For Debiased NCE loss, the absolute improvements over SimCLR are $3.2$ and $2.0$ percentage points on the STL10 and CIFAR100 datasets, respectively. The performance of our method is very similar to that of \cite{robinson2021contrastive}. This is consistent with our theoretical analysis and insights from Lemma \ref{connection} which shows that our entropy-regularized hard negative sampling distribution has the similar exponential form as the distribution proposed in \cite{robinson2021contrastive}.

\begin{table}[htb]
\caption{\label{tbl:Tiny-imagenet}
{Best linear readout accuracy on} Tiny-ImageNet dataset.}
\begin{center}
\begin{tabular}{|c c c c|}
\hline
 Loss function &  SimCLR \cite{chen2020simple}& HCL\cite{robinson2021contrastive}  & Entropic OT \\ 
  & & & (best $\epsilon$)\\
 \hline
 NCE  &53.4&56.1&56.3  ($\epsilon = 0.5$)\\
 \hline
\end{tabular}
\end{center}

\end{table}

\setlength{\tabcolsep}{2pt}
\begin{table}[!htp] 
\caption{\label{tbl:graph} Accuracy on graph datasets.}
\begin{center}
\begin{tabular}{|c|ccccc|} 
\hline
 Method &  MUTAG & ENZYMES & PTC & IMDB-B & IMDB-M \\ \hline
 InfoGraph~\cite{sun2019infograph}  & 86.8 & 50.4 & 55.3 & 72.2 & 49.6 \\ \hline
 HCL~\cite{robinson2021contrastive} & 87.2 & 50.4 & $\bold{57.3}$ & 72.8 & 49.6 \\ \hline %\hline
% Entropic OT & & & & & \\
 Entropic OT, $\epsilon=0.05$  & 87.2 & $\bold{51.3}$ & 56.8 & $\bold{73.0}$ & 49.8 \\ \cline{2-6}
 Entropic OT, $\epsilon=0.1$  & $\bold{88.2}$ & 50.6&56.1 & 72.7& 49.8 \\ \cline{2-6}
 Entropic OT, $\epsilon=0.5$ & 87.5 & 50.9 & 55.8 & 72.1 & $\bold{50.0}$ \\ \cline{2-6}
 Entropic OT, $\epsilon=1$ & 87.6 & 50.3 & 55.4 & 72.5 & $\bold{50.0}$ \\ \hline
\end{tabular}
\end{center}
%\vspace{-7mm}
\end{table}

\subsection{Graph dataset}
We also appllied our method to learn graph representations on five graph datasets: MUTAG, ENZYMES, PTC, IMDB-BINARY, and IMDB-MULTI by \cite{morris2020tudataset}. We employ InfoGraph \cite{sun2019infograph} as a baseline method. 

\textbf{Training Procedure:} Similarly to the experiments with image datasets, we only replace the negative distribution in the code implementation from \cite{robinson2021contrastive} with ours. The dimension of the node embedding is set to $96$. 
We search for the best values of $\epsilon$ and $\tau$ over the sets $\{0.05,0.1,0.5,1\}$ and $\{0.1, 0.5\}$ respectively. 
For every choice of $\epsilon$ the best value of $\tau$ turned out to be $0.5$. 
We therefore report the accuracy for all choices of $\epsilon$ and $\tau=0.5$. All models are trained for 200 epochs and we use the Adam optimizer with learning rate $0.01$. We used the 3-layer GIN \cite{xu2018how} as the representation function with representation dimension equal to $32$. Then we train an SVM classifier based on the learned graph-embedding. Each model is trained 10 times with 10-fold cross-validation. We report the average accuracy in Table \ref{tbl:graph}.

\textbf{Results:} We report the performance accuracy of the different methods in Table~\ref{tbl:graph} with boldface numbers indicating the best performance. We observe that our method is consistently better than the baseline \cite{sun2019infograph} in all datasets improving the accuracy by $1.4$ and $1.3$ percentage points on the MUTAG and PTC datasets, respectively, and is competitive with the state-of-the-art method in \cite{robinson2021contrastive}.

\subsection{Regularized OT with different cost function}
One advantage of our proposed method is that regularized optimal transport allows us to utilize different cost functions $c(f(x), f(x^-)): \mathbb{R}^d\times\mathbb{R}^d \to \mathbb{R}_{\geq 0}$ with desirable properties compared to the default cost of squared Euclidean distance. For the squared Euclidean case, it has been noted in \cite{cai2020all} that samples that are too close to the anchor are often false negatives (having a class label different from the anchor). On the other hand samples that are too far from the anchor are uninformative. Thus, intuitively, for a negative sample $x^-$ that is too close or too far from the anchor $x$ in the representation space, the cost $c(f(x), f(x^-))$ should be greater than the squared Euclidean distance between $f(x)$ and $f(x^-)$. The corresponding negative distribution should then assign a higher weight to true hard negative samples (hard negative sample with a class label different from that of the anchor).

In this context we propose to use $c(f(x), f(x^-)) = e^{\|f(x)-f(x^-)\|^2_2-\kappa}$ where $\kappa$ is a hyperparameter that trades off the cost of coupling with the degree of entropic regularization. 
% is a hyperparameter that \textcolor{red}{controls} the degree of sharpness of the cost. 
We plot the average Top 1 accuracy of the KNN classifier for every five epochs in Fig \ref{fig:new_cost}. The performance of our method with {the} proposed cost is consistently better than the one with {the} squared Euclidean cost.

\begin{figure}[t]

  \begin{minipage}[h]{0.5\linewidth}
    \centering
    \hspace{-6mm}
    \includegraphics[scale=0.35]{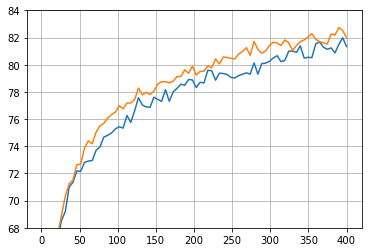}
  \end{minipage}%
  \begin{minipage}[h]{0.6\linewidth}
    \centering
    \hspace{-10mm}
    \includegraphics[scale=0.35]{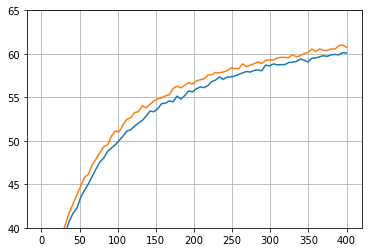}
  \end{minipage}
  \caption{Top 1 KNN accuracy vs epochs for STL10 (left) and CIFAR100 (right). The orange line represents the proposed cost while the blue line corresponds to our default cost function.}
\label{fig:new_cost}
%\vspace{-4mm}

\end{figure}

\begin{figure}[b]
\centerline{\includegraphics[width=9cm, height=3cm]{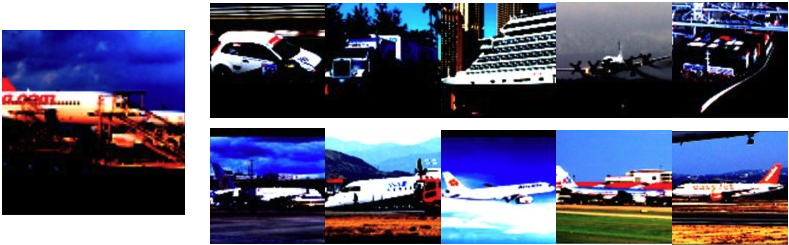}}
\caption{Left: Anchor image. Upper: Top-5 most similar images before training. Lower: Top-5 similar images with our method after training. }\label{10_sim}
%\vspace{-4mm}
\end{figure}

\begin{figure}[tb]
\centerline{\includegraphics[width=9cm, height=3cm]{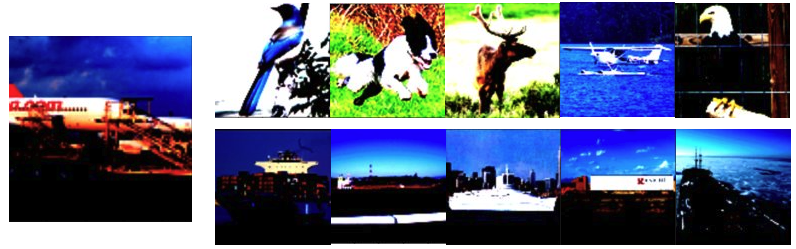}}
\caption{Left: Anchor image. Upper: Uniformly sampled negative images before training. Lower: Negative images with highest weight generated by our method before training. }\label{10_weight}
%\vspace{-3mm}
\end{figure}

\subsection{Discussion}
%mainly discuss the result from the image dataset (STL10). 
To glean qualitative insights into how our method works, in Figure~\ref{10_sim} we plot the top five most similar images to the anchor, i.e., the images with the highest inner product in representation space with the anchor image, 
before training and after training. The larger image on the left is the anchor and to its right are ten images from the same batch. We observe that before training, only one of these five images comes from the same class as the anchor. However, after training for 400 epochs, all five images come from the same class. This indicates that more images with the same labels are becoming closer to each other in the representation space, which is how we would expect a contrastive representation learning method to behave.

In Figure~\ref{10_weight} we show the negative images uniformly chosen from the first batch of the first epoch (i.e., before training commences) and the negative images with the highest $P^*(x^-|x)$ generated by our method. We observe that uniformly sampled negative images are semantically unrelated to the anchor (only 1 out of 5 negative images is related to transportation). But the negative images sampled from our designed ${P}^*(x^-|x)$ is more similar to the anchor in terms of color, background, and significantly, all 5 negative images are related to transportation. 

We plot $P^*(x^-|x)$ and compare it with ${f(x)^{\top}f(x^-)}$ in Figure \ref{trans_inner}. In this figure, the horizontal axis is ordered from left to right by decreasing values of $f(x)^{\top}f(x^-)$. The figure shows that the negative sample $x^-$ with larger $f(x)^{\top}f(x^-)$ is likely to have a higher $ {P}^*(x^-|x)$, which means our designed negative distribution assigns a higher weight to the negative sample closer to the anchor. From Figure~\ref{10_sim}, we see that as we are training the model, samples with the same label will become closer in the representation space (i.e., more similar). This means that after training for a certain number of epochs, the samples coming from the same class as the anchor are more likely to be selected as the negative samples.

\begin{figure}[t]

  \begin{minipage}[h]{0.5\linewidth}
    \centering
    \hspace{-4mm}
    \includegraphics[scale=0.12]{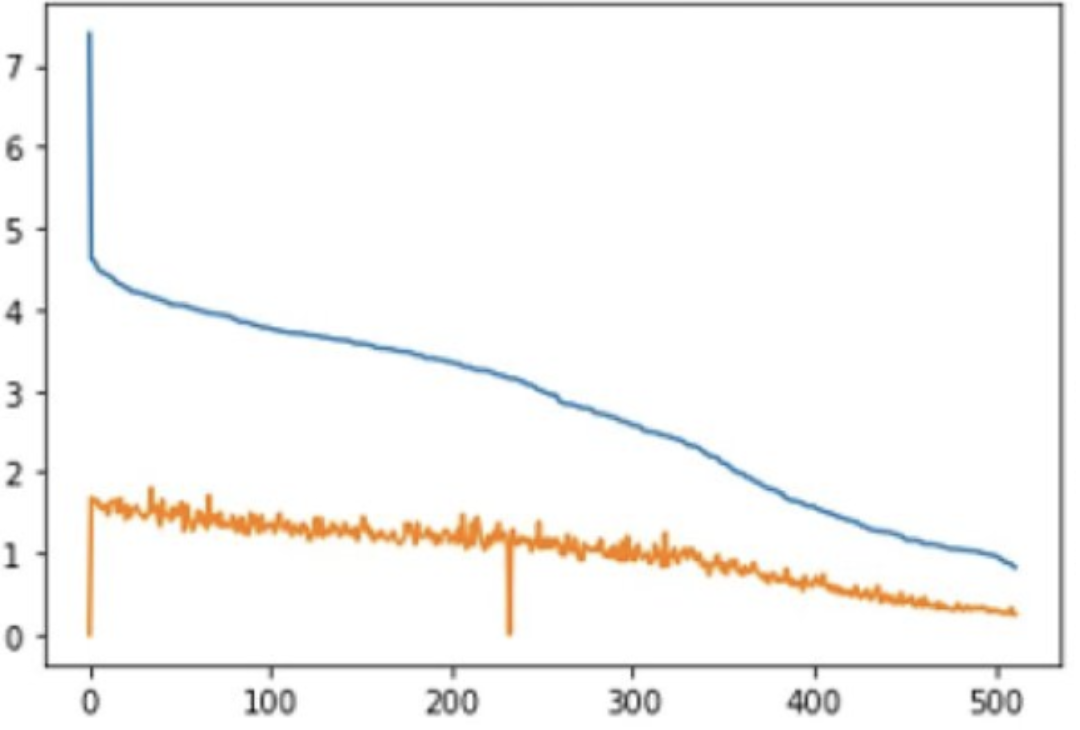}
  \end{minipage}%
  \begin{minipage}[h]{0.6\linewidth}
    \centering
    \hspace{-8mm}
    \includegraphics[scale=0.12]{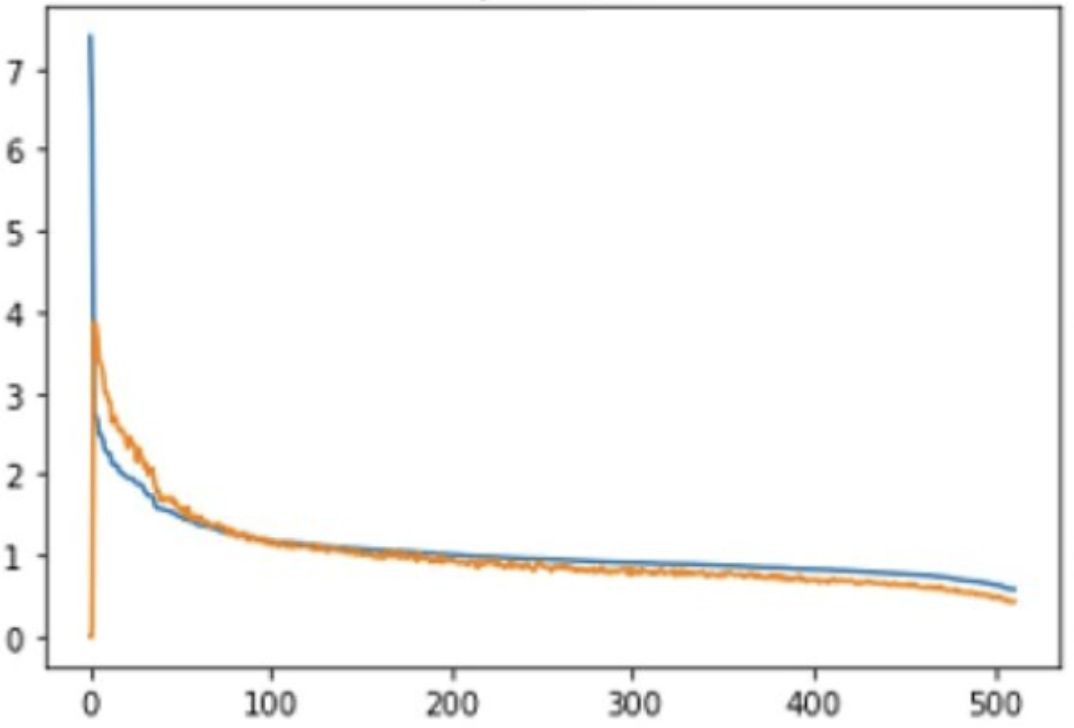}
  \end{minipage}
  %\vspace{-3mm}
\caption{Comparison between $f(x)^{\top}f(x^-)$ and $ {P}(x^-|x)$ generated our method with $\epsilon = 0.3$ in different epoch. The blue line represents $e^{2f(x)^Tf(x)}$, while the orange line corresponds to $P(x^-|x) * B$. We noticed that the figure does not change significantly after 100 epochs, so we only report the comparisons at epochs 0 and 100. }\label{trans_inner}
%\vspace{-4mm}
\end{figure}

\begin{figure}[tb]

\begin{center}
{\includegraphics[width=9cm, height=6cm]{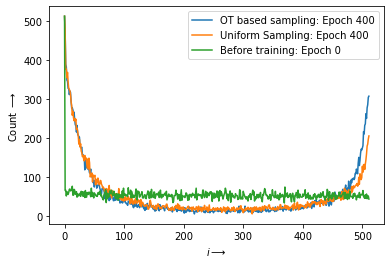}}
\end{center}
  %\vspace{-5mm}
\caption{
%Plot of the number of times the true label of the anchor is the same as the true label of the negative examples (in the same batch over different \textcolor{red}{methods}) that are arranged in decreasing order of similarity along the x-axis. 
%
Counts of number of samples (i.e., anchors) that have the same label as their $i$th closest neighbor within a minibatch of $512$ samples, for $i$ ranging from $1$ through $512$. Each count is calculated from the same minibatch of an epoch and minibatches are kept fixed across all epochs.
%Counts of number of samples (i.e., anchors) that have the same label as their $i$th closest neighbor within a minibatch of $512$ samples, for $i$ ranging from $1$ through $512$. Each count is {calculated from the same minibatch} of an epoch. 
%
Green line: Before training. Orange line: after training with uniform sampling. Blue line: after training with our method.} \label{distribution} 
%\rjcomment{Sorry, I made a wrong claim this morning. After check, the data was from the same minibatch with different methods over different epochs.}
\end{figure}

To get additional insights, in Figure \ref{distribution} we plot the number of times the anchor has the same label as the negative sample, which are arranged in order of increasing distance from the anchor along the horizontal axis. The figure shows that not only are more samples getting closer to the anchor, but also that more samples with the same label are getting farther. This highlights the tradeoffs inherent to the design of hard negatives using the proposed method. The method forces the anchor to contrast more with images that are closer to it in the representation domain and these are likely to come from the same class as the anchor. 

We plot some examples of the most similar and dissimilar images from one batch in Figure~\ref{TSNE}. We observe that all of the top-10 similar images and nine of top-10 dissimilar images come from the class `plane'. The plane in the anchor image and the plane in the most similar images share some white-colored fuselage and darker background. However, the colors of the plane in the most dissimilar images are very different from that of the plane in the anchor image. Most of the dissimilar images contain black-colored fuselage and lighter backgrounds.

Thus, our model seems to have learned some irrelevant characteristics such as color and discarded higher-level features shared by the entire object class `plane' (the label of the anchor). This pushes some other plane images far away from the anchor in the representation space.

\begin{figure}[tb]
\begin{center}
{\includegraphics[width=9cm, height=7cm]{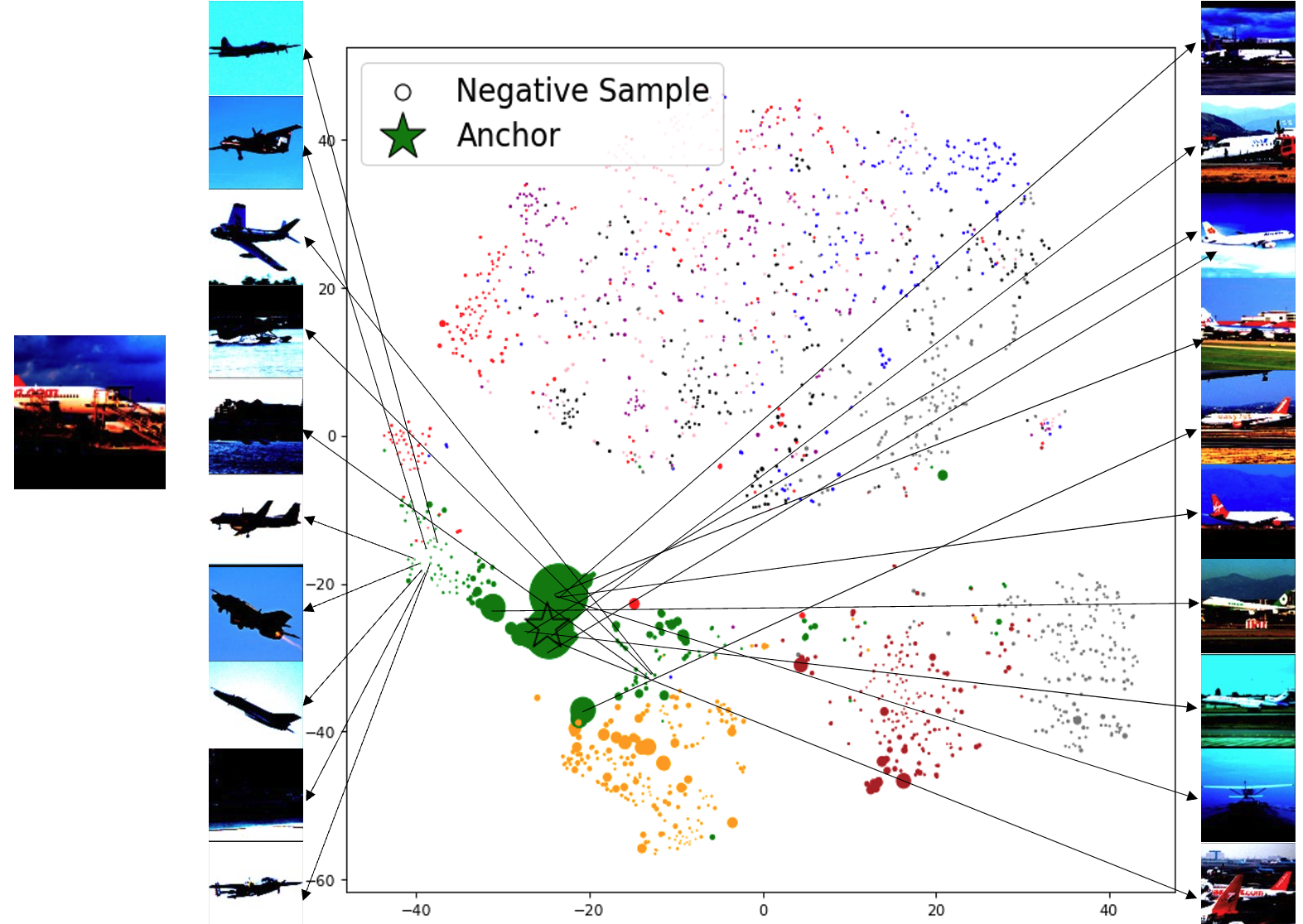}}
\end{center}
%\vspace{-3mm}
\caption{T-SNE visualization of representations learned with our method. The color indicates class label. Marker size corresponds to the calculated $P^*(x^-|x)$ with larger sizes for larger weights. We plot top-10 most similar and dissimilar images to the anchor on right and left separately among 2048 images from the test set.} \label{TSNE}
%\vspace{-4mm}

\end{figure}

\section{Conclusion and Future Work}
In this paper, we proposed and analyzed a novel min-max setting for hard negative sampling for unsupervised contrastive learning. We showed that without further regularization, the optimal representation can become degenerate for a large class of contrastive loss functions. We showed that reframing the problem in terms of regularized optimal transport (OT) offers a systematic approach for designing negative sampling distributions to learn good non-degenerate representations that can improve the performance of downstream classification tasks. Our work also provides a theoretical motivation for a state-of-the-art negative sampling method recently proposed in \cite{robinson2021contrastive}. 

The OT perspective on the design of hard-negatives proposed in this paper opens up the possibility of employing and investigating the effects of regularization mechanisms other than the entropic regularization of the optimal transport couplings, different marginal constraints, and the ground-costs, for potentially improved design of negative samples.

\bibliographystyle{IEEEtran}
\bibliography{IJCNN}

% Generated by IEEEtran.bst, version: 1.12 (2007/01/11)
\begin{thebibliography}{10}
\providecommand{\url}[1]{#1}
\csname url@samestyle\endcsname
\providecommand{\newblock}{\relax}
\providecommand{\bibinfo}[2]{#2}
\providecommand{\BIBentrySTDinterwordspacing}{\spaceskip=0pt\relax}
\providecommand{\BIBentryALTinterwordstretchfactor}{4}
\providecommand{\BIBentryALTinterwordspacing}{\spaceskip=\fontdimen2\font plus
\BIBentryALTinterwordstretchfactor\fontdimen3\font minus \fontdimen4\font\relax}
\providecommand{\BIBforeignlanguage}[2]{{%
\expandafter\ifx\csname l@#1\endcsname\relax
\typeout{** WARNING: IEEEtran.bst: No hyphenation pattern has been}%
\typeout{** loaded for the language `#1'. Using the pattern for}%
\typeout{** the default language instead.}%
\else
\language=\csname l@#1\endcsname
\fi
#2}}
\providecommand{\BIBdecl}{\relax}
\BIBdecl

\bibitem{oord2018representation}
A.~v.~d. Oord, Y.~Li, and O.~Vinyals, ``Representation learning with contrastive predictive coding,'' \emph{arXiv preprint arXiv:1807.03748}, 2018.

\bibitem{Zhu:2020vf}
\BIBentryALTinterwordspacing
Y.~Zhu, Y.~Xu, F.~Yu, Q.~Liu, S.~Wu, and L.~Wang, ``{Deep Graph Contrastive Representation Learning},'' in \emph{ICML Workshop on Graph Representation Learning and Beyond}, 2020. [Online]. Available: \url{http://arxiv.org/abs/2006.04131}
\BIBentrySTDinterwordspacing

\bibitem{wang2020understanding}
T.~Wang and P.~Isola, ``Understanding contrastive representation learning through alignment and uniformity on the hypersphere,'' in \emph{International Conference on Machine Learning}.\hskip 1em plus 0.5em minus 0.4em\relax PMLR, 2020, pp. 9929--9939.

\bibitem{chen2020simple}
T.~Chen, S.~Kornblith, M.~Norouzi, and G.~Hinton, ``A simple framework for contrastive learning of visual representations,'' in \emph{International conference on machine learning}.\hskip 1em plus 0.5em minus 0.4em\relax PMLR, 2020, pp. 1597--1607.

\bibitem{perozzi2014deepwalk}
B.~Perozzi, R.~Al-Rfou, and S.~Skiena, ``Deepwalk: Online learning of social representations,'' in \emph{Proceedings of the 20th ACM SIGKDD international conference on Knowledge discovery and data mining}, 2014, pp. 701--710.

\bibitem{goldberg2014word2vec}
Y.~Goldberg and O.~Levy, ``word2vec explained: deriving mikolov et al.'s negative-sampling word-embedding method,'' \emph{arXiv preprint arXiv:1402.3722}, 2014.

\bibitem{logeswaran2018efficient}
L.~Logeswaran and H.~Lee, ``An efficient framework for learning sentence representations,'' in \emph{International Conference on Learning Representations}, 2018.

\bibitem{saunshi2019theoretical}
N.~Saunshi, O.~Plevrakis, S.~Arora, M.~Khodak, and H.~Khandeparkar, ``A theoretical analysis of contrastive unsupervised representation learning,'' in \emph{International Conference on Machine Learning}.\hskip 1em plus 0.5em minus 0.4em\relax PMLR, 2019, pp. 5628--5637.

\bibitem{chuang2020debiased}
C.-Y. Chuang, J.~Robinson, Y.-C. Lin, A.~Torralba, and S.~Jegelka, ``Debiased contrastive learning,'' \emph{Advances in neural information processing systems}, vol.~33, pp. 8765--8775, 2020.

\bibitem{tschannen2019mutual}
M.~Tschannen, J.~Djolonga, P.~K. Rubenstein, S.~Gelly, and M.~Lucic, ``On mutual information maximization for representation learning,'' in \emph{International Conference on Learning Representations}, 2019.

\bibitem{jin2018unsupervised}
S.~Jin, A.~RoyChowdhury, H.~Jiang, A.~Singh, A.~Prasad, D.~Chakraborty, and E.~Learned-Miller, ``Unsupervised hard example mining from videos for improved object detection,'' in \emph{Proceedings of the European Conference on Computer Vision (ECCV)}, 2018, pp. 307--324.

\bibitem{robinson2021contrastive}
\BIBentryALTinterwordspacing
J.~D. Robinson, C.-Y. Chuang, S.~Sra, and S.~Jegelka, ``Contrastive learning with hard negative samples,'' in \emph{International Conference on Learning Representations}, 2021. [Online]. Available: \url{https://openreview.net/forum?id=CR1XOQ0UTh-}
\BIBentrySTDinterwordspacing

\bibitem{kalantidis2020hard}
Y.~Kalantidis, M.~B. Sariyildiz, N.~Pion, P.~Weinzaepfel, and D.~Larlus, ``Hard negative mixing for contrastive learning,'' \emph{Advances in Neural Information Processing Systems}, vol.~33, pp. 21\,798--21\,809, 2020.

\bibitem{ho2020contrastive}
C.-H. Ho and N.~Nvasconcelos, ``Contrastive learning with adversarial examples,'' \emph{Advances in Neural Information Processing Systems}, vol.~33, pp. 17\,081--17\,093, 2020.

\bibitem{cherian2020representation}
A.~Cherian and S.~Aeron, ``Representation learning via adversarially-contrastive optimal transport,'' in \emph{International Conference on Machine Learning}.\hskip 1em plus 0.5em minus 0.4em\relax PMLR, 2020, pp. 1820--1830.

\bibitem{weinberger2009distance}
K.~Q. Weinberger and L.~K. Saul, ``Distance metric learning for large margin nearest neighbor classification.'' \emph{Journal of machine learning research}, vol.~10, no.~2, 2009.

\bibitem{sohn2016improved}
K.~Sohn, ``Improved deep metric learning with multi-class n-pair loss objective,'' in \emph{Advances in neural information processing systems}, 2016, pp. 1857--1865.

\bibitem{wu2021conditional}
\BIBentryALTinterwordspacing
M.~Wu, M.~Mosse, C.~Zhuang, D.~Yamins, and N.~Goodman, ``Conditional negative sampling for contrastive learning of visual representations,'' in \emph{International Conference on Learning Representations}, 2021. [Online]. Available: \url{https://openreview.net/forum?id=v8b3e5jN66j}
\BIBentrySTDinterwordspacing

\bibitem{zheng2021contrastive}
H.~Zheng, X.~Chen, J.~Yao, H.~Yang, C.~Li, Y.~Zhang, H.~Zhang, I.~Tsang, J.~Zhou, and M.~Zhou, ``Contrastive attraction and contrastive repulsion for representation learning,'' \emph{arXiv preprint arXiv:2105.03746}, 2021.

\bibitem{schroff2015facenet}
F.~Schroff, D.~Kalenichenko, and J.~Philbin, ``Facenet: A unified embedding for face recognition and clustering,'' in \emph{Proceedings of the IEEE conference on computer vision and pattern recognition}, 2015, pp. 815--823.

\bibitem{gutmann2010noise}
M.~Gutmann and A.~Hyv{\"a}rinen, ``Noise-contrastive estimation: A new estimation principle for unnormalized statistical models,'' in \emph{Proceedings of the thirteenth international conference on artificial intelligence and statistics}.\hskip 1em plus 0.5em minus 0.4em\relax JMLR Workshop and Conference Proceedings, 2010, pp. 297--304.

\bibitem{xiong2021approximate}
\BIBentryALTinterwordspacing
L.~Xiong, C.~Xiong, Y.~Li, K.-F. Tang, J.~Liu, P.~N. Bennett, J.~Ahmed, and A.~Overwijk, ``Approximate nearest neighbor negative contrastive learning for dense text retrieval,'' in \emph{International Conference on Learning Representations}, 2021. [Online]. Available: \url{https://openreview.net/forum?id=zeFrfgyZln}
\BIBentrySTDinterwordspacing

\bibitem{santambrogio2015optimal}
F.~Santambrogio, ``Optimal transport for applied mathematicians,'' \emph{Birk{\"a}user, NY}, vol.~55, no. 58-63, p.~94, 2015.

\bibitem{villani2021topics}
C.~Villani, \emph{Topics in optimal transportation}.\hskip 1em plus 0.5em minus 0.4em\relax American Mathematical Soc., 2021, vol.~58.

\bibitem{genevay_thesis}
\BIBentryALTinterwordspacing
A.~Genevay, ``{Entropy-Regularized Optimal Transport for Machine Learning},'' Theses, {PSL University}, Mar. 2019. [Online]. Available: \url{https://tel.archives-ouvertes.fr/tel-02319318}
\BIBentrySTDinterwordspacing

\bibitem{peyre2019computational}
G.~Peyr{\'e}, M.~Cuturi \emph{et~al.}, ``Computational optimal transport: With applications to data science,'' \emph{Foundations and Trends{\textregistered} in Machine Learning}, vol.~11, no. 5-6, pp. 355--607, 2019.

\bibitem{flamary2021pot}
\BIBentryALTinterwordspacing
R.~Flamary, N.~Courty, A.~Gramfort, M.~Z. Alaya, A.~Boisbunon, S.~Chambon, L.~Chapel, A.~Corenflos, K.~Fatras, N.~Fournier, L.~Gautheron, N.~T. Gayraud, H.~Janati, A.~Rakotomamonjy, I.~Redko, A.~Rolet, A.~Schutz, V.~Seguy, D.~J. Sutherland, R.~Tavenard, A.~Tong, and T.~Vayer, ``Pot: Python optimal transport,'' \emph{Journal of Machine Learning Research}, vol.~22, no.~78, pp. 1--8, 2021. [Online]. Available: \url{http://jmlr.org/papers/v22/20-451.html}
\BIBentrySTDinterwordspacing

\bibitem{cuturi2013sinkhorn}
M.~Cuturi, ``Sinkhorn distances: Lightspeed computation of optimal transport,'' \emph{Advances in neural information processing systems}, vol.~26, pp. 2292--2300, 2013.

\bibitem{coates2011analysis}
A.~Coates, A.~Ng, and H.~Lee, ``An analysis of single-layer networks in unsupervised feature learning,'' in \emph{Proceedings of the fourteenth international conference on artificial intelligence and statistics}.\hskip 1em plus 0.5em minus 0.4em\relax JMLR Workshop and Conference Proceedings, 2011, pp. 215--223.

\bibitem{krizhevsky2009learning}
A.~Krizhevsky, G.~Hinton \emph{et~al.}, ``Learning multiple layers of features from tiny images,'' 2009.

\bibitem{le2015tiny}
Y.~Le and X.~Yang, ``Tiny imagenet visual recognition challenge,'' \emph{CS 231N}, vol.~7, no.~7, p.~3, 2015.

\bibitem{sun2019infograph}
F.-Y. Sun, J.~Hoffman, V.~Verma, and J.~Tang, ``Infograph: Unsupervised and semi-supervised graph-level representation learning via mutual information maximization,'' in \emph{International Conference on Learning Representations}, 2019.

\bibitem{morris2020tudataset}
C.~Morris, N.~M. Kriege, F.~Bause, K.~Kersting, P.~Mutzel, and M.~Neumann, ``Tudataset: A collection of benchmark datasets for learning with graphs,'' \emph{arXiv preprint arXiv:2007.08663}, 2020.

\bibitem{xu2018how}
\BIBentryALTinterwordspacing
K.~Xu, W.~Hu, J.~Leskovec, and S.~Jegelka, ``How powerful are graph neural networks?'' in \emph{International Conference on Learning Representations}, 2019. [Online]. Available: \url{https://openreview.net/forum?id=ryGs6iA5Km}
\BIBentrySTDinterwordspacing

\bibitem{cai2020all}
T.~T. Cai, J.~Frankle, D.~J. Schwab, and A.~S. Morcos, ``Are all negatives created equal in contrastive instance discrimination?'' \emph{arXiv preprint arXiv:2010.06682}, 2020.

\end{thebibliography}

\end{document}